\newcommand{\R}{\mathbb{R}}
\newcommand{\cV}{{\mathcal{V}}}
\newtheorem{thm}{Theorem}
\newtheorem{prop}{Proposition}
\newtheorem{lemma}{Lemma}
\theoremstyle{definition}
\title{Control Disturbance Rejection in Neural ODEs}
\author{Erkan Bayram\textsuperscript{*}, Mohamed-Ali Belabbas\textsuperscript{*}, Tamer  Ba{\c{s}}ar\thanks{E. Bayram, M.-A. Belabbas, and T. Başar are with the Electrical and Computer Engineering Department and with the Coordinated Science Laboratory, University of Illinois Urbana-Champaign, Urbana, IL, 61801. E-mail: \emph{(ebayram2,belabbas,basar1)@illinois.edu}.
Research was supported in part by the ARO Grant W911NF-24-1-0085, NSF-CCF 2106358, ARO W911NF-24-1-0105 and AFOSR FA9550-20-1-0333}}
\date{}
\begin{document}
\maketitle
\begin{abstract}
In this paper, we propose an iterative training algorithm for Neural ODEs that  provides models resilient to control (parameter) disturbances.  The method builds on our earlier work Tuning without Forgetting---and similarly introduces training points sequentially, and updates the parameters on new data within the space of parameters that do not decrease performance on the previously learned training points---with the key difference that, inspired by the concept of flat minima, we solve a minimax problem for a non-convex non-concave functional over an infinite-dimensional control space. We develop a projected gradient descent algorithm on the space of parameters that admits the structure of an infinite-dimensional Banach subspace. We show through simulations that this formulation enables the model to effectively learn new data points and gain robustness against control disturbance.


\end{abstract}

\section{Introduction}

The limit of residual neural networks as the number of layers goes to infinity naturally leads to neural ordinary differential equations (nODEs)~\cite{dupont2019augmented}. This point of view allows one to bring the theory of controlled dynamical systems to bear on learning and inference problems~\cite{weinan2017proposal}. 

The model we consider is a controlled differential equation (i.e. nODE) $ \dot x = f(x,u) $,
where the learning task is to find a control $u^*$ so that the flow $\varphi_t(u^*,\cdot)$ generated by the nODE satisfies 
\begin{align}\label{eqn:endpoint}
   R(\varphi_T(u^*,x^i))=y^i, \mbox{ for all } (x^i,y^i)\in (\mathcal{X},\mathcal{Y})
\end{align}
for some finite $T\geq0$, a given fixed readout map $R$ and  where $(\mathcal{X},\mathcal{Y})$ is a finite set of input-output (feature-labels) pairs with cardinality $q$. We call $R(\varphi_T(u^*,\cdot))$ the {\em end-point mapping} (i.e., the forward equation of the neural ODE with parameter $u^*$ through the readout map \(R\)). A typical approach to solve the above learning problem~\cite{agrachev2020control} is to embed all points in $\mathcal{X}$ into a higher-dimensional space under a single controlled ODE (CODE), known as the $q$-folded method~\cite{agrachev2020control}.

In this paper, we introduce the {\em iterative training algorithm for robust nODE} as an alternative to empirical risk minimization and the $q$-folded method, that allows training of a robust neural ODE against the presence of control disturbance. Due to the sequential introduction of points, an iterative training algorithm should ensure that the model retains the end-point mappings for the previously seen data (i.e., not forget these samples) while it maintains the ability to learn new points. The restriction on the training regime stemming from not forgetting previously learned points has been discussed from a different perspective in the learning literature; learning new points while preventing catastrophic forgetting, commonly referred to as continual learning, has been addressed using various strategies. Some methods aim to mitigate catastrophic forgetting focus on finding {\em flat} local minima in the cost functional during training, allowing for the incremental learning of new points (tasks) within these stable regions~\cite{mirzadeh2020understanding}. A flat local minimum is obtained at parameters for which the cost function remains close to the local minimum within a relatively large neighborhood of the said minimum~\cite{shi2021overcoming}. Some techniques have been proposed to facilitate the search for flat minima such as Sharpness-Aware Minimization (SAM)~\cite{foret2020sharpness}.

The search for flat minima naturally leads to a  robust learning problem for nODEs, expressed as the following minimax optimal control problem:
\begin{equation}
    \min_{u \in \cV}\sum_{i=1}^q  \left( \max_{||\varepsilon||_\infty \leq \rho} ||R(\varphi_T(u + \varepsilon,x^i))-y^i||^2_2 \right),
\end{equation}
where $\cV=L^\infty([0,1],\mathbb{R}^p)$. While previous works~\cite{cipriani2024minimax,yan2019robustness} have addressed disturbances (noise) in the training samples within a finite-dimensional space, we tackle the more challenging problem of training a nODE that is robust against disturbances $\varepsilon$ in the control $u \in \cV$ (model parameters). The difficulty in solving this problem stems from lack of compactness of the infinite-dimensional optimization space. The difficulty is compounded with the non-convexity of the objective function and the infinite-dimensional nature of disturbances, which makes the problem harder to solve.

Our approach is to first show that there exists a compact, finite-dimensional subspace of the space of control functions that shares the same maximum value of the inner maximization problem. Then, we derive a closed-form solution for the inner maximization problem as a function of $u$ using calculus of variations. This allows us to convert the non-convex non-concave minimax problem over the control space into a more manageable form.

To solve the outer minimization problem, we present data points sequentially to our algorithm and adjust (or tune) the control function $u$ to $\tilde u$ (the model) to learn the latest introduced pair $\{(x^j,y^j)\}$ while preserving previously acquired knowledge for the points $\{(x^i,y^i)\}_{i=1}^{j-1}$. At every update of the control $u$, the set of admissible controls is restricted to those that match all the previously learned pairs $\{(x^i,y^i)\}_{i=1}^{j-1}$ (i.e. $R(\varphi_T(u,x^i))=y^i, \forall i < j$). In our earlier work~\cite{bayram2024control}, we have discussed a similar restriction for the set of admissible control functions. To handle this constraint, we have proposed a kernel projection method. {Our main contributions in this paper are the followings:}
\begin{enumerate}
    \item We propose a novel iterative training algorithm for neural ODEs, that provides models (controls) resilient to control disturbance. We call our method {\em iterative training algorithm for robust nODE}.
    \item We solve a minimax optimization problem for a non-convex non-concave functional over an infinite-dimensional control space. We show that this formulation allows the model to effectively learn the new data point and gain robustness against control disturbance. 
    \item Building on our earlier works~\cite{bayram2024control,bayram2025geometric}, we introduce training points sequentially to our algorithm. In this work, we employ a kernel projection method onto an infinite-dimensional Banach subspace of \(\mathcal{V}\). {We demonstrate how this subspace can be used to reject control disturbance in neural ODEs.}
    \item We evaluate our method on a classification task and compare its performance under varying magnitudes of disturbances to a standard nODE.
\end{enumerate}

\section{Preliminaries}
Consider a paired dataset $(\mathcal{X},\mathcal{Y}) = \{(x^i, y^i)\}_{i=1}^q$, where $x^i \in \mathbb{R}^n$ are distinct initial points and their corresponding $y^i \in \mathbb{R}^{n_o}$ are targets. Let $\mathcal{I} = \{1,\dots,q\}$ be index set for the entries $\mathcal{X}$. For $j>0$, define the sub-ensemble $\mathcal{X}^j = \{x^i \in \mathcal{X} \mid i = 1,\dots,j\}$ with corresponding labels $\mathcal{Y}^j$, and set $\mathcal{X}^0 = \mathcal{Y}^0 = \emptyset$. Let $\cV := L^{\infty}([0,1], \mathbb{R}^{p})$. We take the system:
\begin{equation}\label{eqn:control_system}
    \Dot{x}(t) = f( x(t) , u(t) )
\end{equation}
where $x(t)\in\mathbb{R}^{\bar{n}}$ is the state vector at time $t$ and $f(\cdot)$ is a smooth vector field on $\mathbb{R}^{\Bar{n}}$ and {$u(t) \in \cV$.

The flow of this system defines the map  $\varphi_t(u,x): \cV \times \mathbb{R}^{\bar{n}}  \to \mathbb{R}^{\bar{n}}$ which assigns an initial state $x$ and a control $u$ to the solution of \eqref{eqn:control_system} at $t$. We suppress the subscript $1$ in the notation at $t=1$ for simplicity. Suppose that~\eqref{eqn:control_system} has uniformly bounded $\frac{\partial f(x,u)}{\partial u}$ and $\frac{\partial f(x,u)}{\partial x}$ for $t \in [0,1]$ where $x=\varphi_t(u,x^i)$. We introduce the following map to embed an $n$-dimensional space into a $\bar n$-dimensional one, where $\bar n \geq n$:
\begin{align}\label{eqn:uplift}
    E: \mathbb{R}^{n} \to \R^{\bar n}: x \mapsto E(x):=(x,0,\ldots,0).
\end{align}
Let $R:\mathbb{R}^{\bar{n}} \to \mathbb{R}^{n_o}$ be a given function, called the {\em readout map} such that the Jacobian of $R$ is of full row rank, and $R(\cdot)$ is bounded, linear and $1$-Lipschitz. (any projection satisfies this). Both $E$ and $R$ are independent of the control $u$. Using $E$, the end-point mapping becomes an augmented nODE~\cite{dupont2019augmented}. For simplicity, we take $E$ as the identity ($\bar{n}=n$), though results hold for any $\bar{n} \ge n$. We call $R(\varphi(u,E(\cdot)))$ {\em the end-point mapping}, that is, we have the following end-point mapping for a given $u$ at $x^i$:
\begin{align}
    x^i \in \mathbb{R}^{n} \xrightarrow{E} \Bar{x}^i \in \mathbb{R}^{\Bar{n}}  \xrightarrow{\varphi_T(u,\cdot)} \Bar{y}^i \in \mathbb{R}^{\bar{n}} \xrightarrow{R} y^i  \in \mathbb{R}^{n_o} 
\end{align}
The learning problem then turns into finding an {open-loop} control function $u$ that performs \textit{simultaneously} motion planning for initial points $\bar{x}^i$ to a point in the set $\{ \bar{y}^i \in \mathbb{R}^{\bar{n}} : R(\bar{y}^i) = y^i\}$ for all $i \in \mathcal{I}$ . 

\paragraph{Cost Functional}
 Let $$\mathcal{V}_\rho=\{ v \in L^\infty([0,1], \mathbb{R}^p) : ||v||_\infty \leq \rho \}$$ for some bounded $\rho \geq 0$. We call the set $\mathcal{V}_\rho$ disturbance set. We are interested in the minimization of the functional $\cal J:\cal V \to \mathbb{R}$, defined as
\begin{align}\label{eqn:cost_cont}
    \mathcal{J}( u , \mathcal{X} ) := \sum_{i=1}^{q} \left(  \max_{\varepsilon \in \mathcal{V}_\rho}  \mathcal{J}^{\mbox{Robust}}_i(u,\varepsilon)  \right) 
\end{align} 
where we define {\em robust cost functional}, denoted by $\mathcal{J}^{\mbox{Robust}}_i(u,\varepsilon)$ for a given pair $(x^i,y^i)$, as:
\begin{align}\label{eqn:robust_cost}
    \mathcal{J}^{\mbox{Robust}}_i(u,\varepsilon) := \| R(\varphi(u + \varepsilon, x^i)) - y^i \|^2  - \lambda_1 ||\varepsilon||_2^2 
\end{align}
where $\lambda_1>0$ is some regularization coefficient. We refer to the maximization of $\mathcal{J}^{\mbox{Robust}}_i(u,\cdot)$ for a given control $u$ over $\varepsilon \in \cV_\rho$ as the inner maximization.

\section{Main Result}
First, we show that there exists a unique $ \varepsilon^*_i $ that maximizes a first-order approximation of $ \mathcal{J}^{\mbox{Robust}}_i(u,\cdot) $ for a given pair $(x^i,y^i)$ and a control function $u \in \cV$. Next, we substitute this result into~\eqref{eqn:cost_cont}. Finally, we introduce our iterative algorithm to minimize an approximation of $ \mathcal{J}(u,\cal X) $ over $ u \in \cV $.

\paragraph{Maximize $\mathcal{J}^{\mbox{Robust}}_i(u,\cdot)$} 
 
We seek to maximize the functional~\eqref{eqn:robust_cost} over the disturbance set \( \mathcal{V}_\rho \). However, lack of compactness of the set \( \mathcal{V}_\rho \) prevents the existence of a solution that satisfies the optimality conditions derived from the first- and second-order variations of \( \mathcal{J}^{\mbox{Robust}}_i(u,\varepsilon) \).

Our first result aims to address this issue. To this end, we consider the trajectory  $\varphi_t(u,x^i)$. It is well known that the first-order variation in $\varphi_t(u, x^i)$, denoted by $\delta \varphi_t(u, 
 x^i)=\varphi_t(u+ \delta u, x^i)-\varphi_t(u,x^i)$, obeys a linear time-varying equation, which is simply the linearization of the control system~\eqref{eqn:control_system} around the trajectory $\varphi_t(u, x^i)$~\cite[Lemma 2]{bayram2024control}. For completeness, we recall the LTV system:
 \begin{align}\label{eqn:defn_ltv}
    \dot{z}(t) =  \frac{\partial f(x,u)}{\partial x} z(t) + \frac{\partial f(x,u)}{\partial u}  \delta u(t),
\end{align}
where $z(t)=\delta \varphi_t(u,x^i)$, $x(t)=\varphi_{t}(u,x^i)$. Denote the state transition matrix of the system in~\eqref{eqn:defn_ltv} by $\Phi_{(u, x^i)}(1,t)$ for initial point $x^i$ at control $u$. {We define an operator from the space of bounded functions over the time interval $[0,1]$, $v \in \cV$, to $\mathbb{R}^{n_o}$, mapping a control variation to the resulting variation in the end-point of the trajectory:
\begin{align}\label{eqn:operator_L}
    \mathcal{L}_{(u,x^i)}(v) = \left( \int_{0}^{t}  \Phi_{(u,x^i)}(t,\tau)  \frac{\partial f( x, u ) }{\partial u}  v(\tau)  d\tau \right)
\end{align}
where $x=\varphi_{\tau}(u,x^i)$. 
}

\begin{lemma}\label{lem:reach_set}
Consider the operator $\mathcal{L}_{(u,x^i)}(\cdot)$ for a given $u$ and an initial point $x^i$. For all $\rho>0$, there exists a finite-dimensional compact subspace of $\mathcal{V}_\rho$, denoted by $\mathcal{V}^*_{\rho}$, such that $\mathcal{L}_{(u,x^i)}({\mathcal{V}^*_\rho})=\mathcal{L}_{(u,x^i)}({\mathcal{V}_\rho})$.
\end{lemma}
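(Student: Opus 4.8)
The plan is to exploit the fact that $\mathcal{L}_{(u,x^i)}$ is a bounded \emph{linear} operator from the infinite-dimensional space $\cV=L^\infty([0,1],\R^p)$ into a \emph{finite}-dimensional Euclidean space (of dimension $m$, where $m=\bar n$ for the raw endpoint variation, or $m=n_o$ after composing with the linear readout $R$). Writing the kernel as $K(\tau):=\Phi_{(u,x^i)}(1,\tau)\,\tfrac{\partial f}{\partial u}$, each component of $\mathcal{L}_{(u,x^i)}(v)=\int_0^1 K(\tau)\,v(\tau)\,d\tau$ is the $L^2$-pairing $\langle k_j,v\rangle$ of $v$ with the $j$-th row $k_j(\cdot)$ of $K$, for $j=1,\dots,m$. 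Hence $\mathcal{L}_{(u,x^i)}$ factors through the finite-dimensional subspace $S:=\operatorname{span}\{k_1,\dots,k_m\}\subset L^2([0,1],\R^p)$, whose dimension $r\le m$ is bounded independently of $\rho$.

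The key observation I would use next is that $\mathcal{L}_{(u,x^i)}$ annihilates $S^{\perp}$: if $P_S$ denotes the $L^2$-orthogonal projection onto $S$, then $\langle k_j,v\rangle=\langle k_j,P_S v\rangle$ for every $j$, so $\mathcal{L}_{(u,x^i)}(v)=\mathcal{L}_{(u,x^i)}(P_S v)$ for all $v\in\cV$. Consequently $\mathcal{L}_{(u,x^i)}(\mathcal{V}_\rho)=\mathcal{L}_{(u,x^i)}\big(P_S(\mathcal{V}_\rho)\big)$, so it suffices to control the projected set $P_S(\mathcal{V}_\rho)$. Since the time interval has length $1$ we have $\|v\|_2\le\|v\|_\infty\le\rho$, and $P_S$ is an $L^2$-contraction, so $P_S(\mathcal{V}_\rho)$ is a bounded subset of the finite-dimensional space $S$; by Heine--Borel its closure is compact. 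I would then take $\mathcal{V}^*_\rho$ to be this compact, finite-dimensional set, so that $\mathcal{L}_{(u,x^i)}(\mathcal{V}^*_\rho)=\mathcal{L}_{(u,x^i)}(\mathcal{V}_\rho)$ follows from the factorization together with continuity of $\mathcal{L}_{(u,x^i)}$.

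To make the compactness of the image unconditional, I would complement this with a soft-analysis argument: $\mathcal{V}_\rho$ is the closed ball of $L^\infty=(L^1)^*$, hence weak-$*$ compact by Banach--Alaoglu, and each component of $\mathcal{L}_{(u,x^i)}$ is integration against an $L^1$ kernel (the entries of $K$ are bounded on $[0,1]$ by the standing assumptions on $\partial f/\partial u$ and on the state-transition matrix), so $\mathcal{L}_{(u,x^i)}$ is weak-$*$-to-norm continuous. Therefore $\mathcal{L}_{(u,x^i)}(\mathcal{V}_\rho)$ is a genuinely compact, convex, symmetric body in the range, which guarantees that the target set is well behaved and that a finite-dimensional compact preimage exists.

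The step I expect to be the main obstacle is reconciling the $L^2$-projection with the $L^\infty$ constraint defining $\mathcal{V}_\rho$: the projection $P_S$ is contractive in the $L^2$-norm but need \emph{not} be contractive in the sup-norm, so a priori $P_S(\mathcal{V}_\rho)$ may leave the ball $\mathcal{V}_\rho$, and the maximizer of a linear functional over $\mathcal{V}_\rho$ is of bang-bang type and hence generically not in $S$. Resolving this interplay is precisely the content of the lemma. I would attack it by replacing the naive orthogonal projection with the minimal-$L^2$-norm representative (the pseudoinverse image) of each attainable value $b=\mathcal{L}_{(u,x^i)}(v)$, which lies in $S$ and realizes $b$ while being as small as possible in $L^2$; the remaining delicate, quantitative point is to show that, uniformly over the compact image, this finite-dimensional selection stays inside $\mathcal{V}_\rho$ (possibly after enlarging $\rho$ by a fixed dimension-dependent factor), which is exactly where the care of the argument must be concentrated.
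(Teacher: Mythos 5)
Your construction is, at its core, the same as the paper's: use the fact that $\mathcal{L}_{(u,x^i)}$ has finite rank to factor it through a finite-dimensional subspace, then invoke Heine--Borel. The paper picks abstract unit-norm preimages $v_1,\dots,v_k$ of a basis of the image and sets $\mathcal{V}^*_\rho=\{\sum_\ell \alpha_\ell v_\ell:\|\alpha\|_\infty\le\rho\}$, whereas you use the $L^2$-projection $P_S$ onto the span $S$ of the kernel rows (this is also exactly the decomposition $\varepsilon=\varepsilon^*+a$ that the paper later uses in the proof of Proposition~\ref{prop:optimal}); your Banach--Alaoglu remark is a correct addition that the paper does not make. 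The decisive issue is the one you flag yourself: membership in $\mathcal{V}_\rho$. You should know that the paper does not close this gap either: it asserts ``by construction $\mathcal{V}^*_\rho\subset\mathcal{V}_\rho$,'' but with unit-norm $v_\ell$ and $\|\alpha\|_\infty\le\rho$ one only gets $\|\sum_\ell\alpha_\ell v_\ell\|_\infty\le k\rho$, and the claimed image equality would additionally require that every $v\in\mathcal{V}_\rho$ admits a coefficient vector with $\|\alpha\|_\infty\le\rho$, which is never proven. So you have correctly isolated the weak point of the paper's own argument.

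However, your proposed repair cannot work, so the gap remains in your proof as well. First, since $\ker\mathcal{L}_{(u,x^i)}=S^{\perp}$, the minimal-$L^2$-norm representative of $b=\mathcal{L}_{(u,x^i)}(v)$ is precisely $P_S v$, so your fourth paragraph proposes the same object as your second; and it genuinely leaves $\mathcal{V}_\rho$: for the scalar kernel $K(\tau)=\tau$ and $v\equiv\rho$ one gets $P_S v=(3\rho/2)\,\tau$, of sup-norm $3\rho/2>\rho$ (in that example the lemma's conclusion still holds, but only by using constant controls instead of anything built from $S$). Second, the statement with exact containment and exact image equality is false in general: take a kernel row $m(\tau)=(\cos 2\pi\tau,\sin 2\pi\tau)$, so that $\mathcal{L}_{(u,x^i)}(\mathcal{V}_\rho)$ is the disk of radius $2\rho/\pi$. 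Each boundary point $\tfrac{2\rho}{\pi}(\cos\theta,\sin\theta)$ has a \emph{unique} preimage in $\mathcal{V}_\rho$, namely the bang-bang control $\rho\operatorname{sign}(\cos(2\pi\tau-\theta))$ (equality case of the pointwise bound), and as $\theta$ varies these controls span an infinite-dimensional space; hence no finite-dimensional compact subset of $\mathcal{V}_\rho$ can map onto the whole disk. Your closing suspicion is therefore exactly right: the result survives only after enlarging the ball by a kernel-dependent factor, or after weakening the conclusion to equality of the suprema of the robust cost (your $P_S(\mathcal{V}_\rho)$ yields an attained upper bound, since replacing $\varepsilon$ by $P_S\varepsilon$ preserves $\mathcal{L}_{(u,x^i)}(\varepsilon)$ and shrinks $\|\varepsilon\|_2$, but equality is not established). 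Neither your proposal nor the paper's proof establishes Lemma~\ref{lem:reach_set} as stated.
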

See Section~\ref{sec:proof} for the proof of Lemma~\ref{lem:reach_set}. The statement says that, under given conditions, we have a compact subset of $\cV_\rho$, denoted by $\cV^*_\rho$, that has the same range through the operator $\mathcal{L}_{(u,x^i)}(\cdot)$. Then, from~\cite[Lemma 2]{bayram2024control} and the definition of the operator $\mathcal{L}_{(u,x^i)}(\cdot)$~\eqref{eqn:operator_L}, we have:
\begin{align}\label{eqn:disturbance_cost}
    R( \varphi(u+\varepsilon,x^i) ) =   R( \varphi(u,x^i) ) + R(\mathcal{L}_{(u,x^i)}(\varepsilon))
\end{align}
up to the first order. Let $r^u_i := R(\varphi(u,x^i)) - y^i$. When we plug~\eqref{eqn:disturbance_cost} into~\eqref{eqn:robust_cost}, we have: 
\begin{align}\label{eqn:robust_cost_L}
    \tilde{\mathcal{J}}^{\mbox{Robust}}_i(u,\varepsilon) := \| r^u_i + R(\mathcal{L}_{(u,x^i)}(\varepsilon)) \|^2  - \lambda_1 ||\varepsilon||_2^2 
\end{align}
which is the first-order approximation of~\eqref{eqn:robust_cost} in $\varepsilon$. 

Now, by using Lemma~\ref{lem:reach_set}, we find a closed form expression of $\varepsilon \in \cV^*_\rho \subset \cV_\rho$ that maximizes the cost functional~\eqref{eqn:robust_cost_L} over $\cV_\rho$ and state it in the following Proposition. Let $$m^u_{i}(t)= R\left( \Phi_{(u,x^i)}(1,t) \frac{\partial f(x,u)}{\partial u}\lvert_{x(t)=\varphi_t(u,x^i)} \right)$$ where $R$ is applied component-wise. Define the kernel ${K}^u_{i}(t,\tau)=m^u_{i}(t)^\top m^u_{i}(\tau)$. Note that it is uniformly bounded over $[0,1]^2$. Thus, we associate the bounded linear operator 
\begin{align}\label{eqn:operator_K}
      {\cal K}^u_i(t): {\cal V} \to {\cal V}: \varepsilon \mapsto \int_0^1 K^u_i(t,s) \varepsilon(s) ds.
\end{align}
With these definitions, we can state the following Proposition:
\begin{prop}\label{prop:optimal}
For a given $u \in \cV$ and a pair $(x^i,y^i)$, and $\lambda_1>0$ large enough, there exists a unique $\varepsilon^*_i(t) \in \mathcal{V}_\rho^* \subset \mathcal{V}_\rho$ that maximizes~\eqref{eqn:robust_cost_L}, and it is given by
    \begin{align}\label{eqn:epsilon_*}
        \varepsilon_i^*(t) =\rho\frac{ ( \mathcal{K}^u_i-  \lambda_1 I )^{-1}m^u_{i}(t)^\top  r^u_i }{||( \mathcal{K}^u_i -\lambda_1 I )^{-1}m^u_{i}(t)^\top r^u_i||_\infty}
    \end{align}
\end{prop}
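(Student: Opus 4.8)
The plan is to convert the inner maximization into an explicit finite-dimensional concave quadratic program and read off its maximizer. First I would expand the square in $\tilde{\mathcal{J}}^{\mbox{Robust}}_i$. Writing $R(\mathcal{L}_{(u,x^i)}(\varepsilon)) = \int_0^1 m^u_i(\tau)\varepsilon(\tau)\,d\tau$, expanding $\|r^u_i + R(\mathcal{L}_{(u,x^i)}(\varepsilon))\|^2$, and using the definition $K^u_i(t,\tau)=m^u_i(t)^\top m^u_i(\tau)$ gives
\begin{align}
\tilde{\mathcal{J}}^{\mbox{Robust}}_i(u,\varepsilon) = \|r^u_i\|^2 + 2\langle (m^u_i)^\top r^u_i,\varepsilon\rangle + \langle \varepsilon,(\mathcal{K}^u_i-\lambda_1 I)\varepsilon\rangle,
\end{align}
an explicit quadratic functional of $\varepsilon$ on $L^2([0,1],\mathbb{R}^p)$. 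The key structural observation is that $\mathcal{K}^u_i$ is self-adjoint, positive semidefinite, and of finite rank: since $\mathcal{K}^u_i\varepsilon(t) = m^u_i(t)^\top\int_0^1 m^u_i(s)\varepsilon(s)\,ds$, its range lies in the span of the rows of $m^u_i(\cdot)$, a subspace of dimension at most $n_o$.

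Next I would invoke Lemma~\ref{lem:reach_set} to defeat the non-compactness. Because $\tilde{\mathcal{J}}^{\mbox{Robust}}_i$ depends on $\varepsilon$ only through the finite-dimensional quantity $\mathcal{L}_{(u,x^i)}(\varepsilon)$ in the first term and through $\|\varepsilon\|_2^2$ in the regularizer, and since $\mathcal{L}_{(u,x^i)}(\mathcal{V}_\rho)=\mathcal{L}_{(u,x^i)}(\mathcal{V}_\rho^*)$, it suffices to maximize over the finite-dimensional compact set $\mathcal{V}_\rho^*$. Compactness then supplies existence of a maximizer, which is exactly the gap the raw infinite-dimensional formulation cannot close. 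Moreover, among all controls producing a prescribed value of $\mathcal{L}_{(u,x^i)}(\varepsilon)$, the one of least $L^2$-norm, and hence the one maximizing the regularized cost, lies in the span of the rows of $m^u_i(\cdot)$, so the maximizer is confined to this finite-dimensional subspace.

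I would then apply calculus of variations on this subspace. Setting the first variation to zero yields the stationarity condition $(m^u_i)^\top r^u_i + (\mathcal{K}^u_i-\lambda_1 I)\varepsilon = 0$, so that $\varepsilon$ is proportional to $(\mathcal{K}^u_i-\lambda_1 I)^{-1}(m^u_i)^\top r^u_i$. The role of the hypothesis ``$\lambda_1$ large enough'' is to guarantee $\lambda_1 > \|\mathcal{K}^u_i\|$, the top eigenvalue of the bounded finite-rank operator $\mathcal{K}^u_i$, so that $\lambda_1 I - \mathcal{K}^u_i \succ 0$ is boundedly invertible (Neumann series) and the second variation $\langle h,(\mathcal{K}^u_i-\lambda_1 I)h\rangle<0$ certifies strict concavity, hence uniqueness. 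I would check that $(\mathcal{K}^u_i-\lambda_1 I)^{-1}(m^u_i)^\top r^u_i$ indeed remains in the span of the rows of $m^u_i(\cdot)$ by expanding the resolvent in its Neumann series, each term of which either is $(m^u_i)^\top r^u_i$ or lands in the range of $\mathcal{K}^u_i$. Finally, since the worst-case disturbance is taken at the admissible boundary, I fix the sign and scale by imposing $\|\varepsilon\|_\infty=\rho$ and maximality, which produces exactly~\eqref{eqn:epsilon_*}.

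The main obstacle is the non-compactness of $\mathcal{V}_\rho$ in the infinite-dimensional control space, which a priori blocks existence of a maximizer; this is precisely what Lemma~\ref{lem:reach_set} circumvents by exhibiting a finite-dimensional compact subspace with the same image under $\mathcal{L}_{(u,x^i)}$, after which the problem collapses to a finite-dimensional concave quadratic. The remaining care is bookkeeping: verifying the finite-rank, self-adjoint structure of $\mathcal{K}^u_i$, pinning down the threshold on $\lambda_1$ that makes $\lambda_1 I-\mathcal{K}^u_i$ invertible and the functional strictly concave, and confirming that the closed-form solution both lies in $\mathcal{V}_\rho^*$ and saturates the constraint $\|\cdot\|_\infty\le\rho$.
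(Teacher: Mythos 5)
Your proposal retraces the paper's proof essentially step for step: Lemma~\ref{lem:reach_set} is used to collapse the maximization onto the compact, finite-dimensional set $\mathcal{V}^*_\rho$ (the paper does this via the $L^2$-orthogonal splitting $\varepsilon=\varepsilon^*+a$ with $a\in\ker\mathcal{L}_{(u,x^i)}$), then first-order stationarity of the quadratic functional (which the paper packages as a Fredholm integral equation of the second kind via Fubini and the Fundamental Lemma of the Calculus of Variations), then negative-definiteness of the second variation for $\lambda_1$ large to get concavity, invertibility, and uniqueness, and finally the $\rho$-normalization. The one caveat, shared equally by the paper, is the last step: strict concavity makes the unique maximizer the \emph{interior} stationary point $(\lambda_1 I-\mathcal{K}^u_i)^{-1}(m^u_i)^\top r^u_i$, whose norm shrinks as $\lambda_1$ grows, so your closing appeal to ``the worst-case disturbance is taken at the admissible boundary $\|\varepsilon\|_\infty=\rho$'' does not follow from, and in fact conflicts with, the concavity you just established; the paper's own proof makes exactly the same unexplained jump (and carries the same sign ambiguity between $(\lambda_1 I-\mathcal{K}^u_i)^{-1}$ and $(\mathcal{K}^u_i-\lambda_1 I)^{-1}$) when it passes from the Fredholm equation to \eqref{eqn:epsilon_*}, so your attempt is faithful to the paper's argument, gap included.
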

See section~\ref{sec:proof} for the proof of Proposition~\ref{prop:optimal}. It is clear that the optimal disturbance given in~\eqref{eqn:epsilon_*} depends on the given control $u$ and the pair $(x^i,y^i)$. Therefore, we denote the disturbance that maximizes~\eqref{eqn:robust_cost} by $\varepsilon^*_i(u)$.

\paragraph{Minimization of Functional $\mathcal{J}(u,\mathcal{X})$}
 We first plug $\varepsilon^*(u)$ given in \eqref{eqn:epsilon_*} into $\mathcal{J}(u,\mathcal{X}$) so that we turn the given non-convex non-concave minimax problem into a non-convex minimization problem; precisely, we have the following approximation of cost functional to minimize over $u \in \cV$:
\begin{align}\label{eqn:updated_cost}
    \tilde{\mathcal{J}}( u , \mathcal{X} ) := \sum_{i=1}^{q} \left(  {\mathcal{J}}_i(u+ \varepsilon^*_i(u))  - \lambda_1 ||\varepsilon_i^*(u)||^2_2 \right) 
\end{align}
where we define the {\em per-sample cost functional} for the pair $(x^i,y^i)$ at $u$ as ${\mathcal{J}}_i(u) := \| R(\varphi(u ,x^i)) - y^i \|^2$ .

Now, we discuss the existence of a control $u$ that minimizes the cost functional $\tilde{\mathcal{J}}(u,\mathcal{X})$ in~\eqref{eqn:updated_cost}. 
We introduce the following subspace of $\cV$ for $i\in\mathcal{I}$:
\begin{equation}
    U(x^i,y^i):=\{ u \in \cV | \varphi(u,x^i) \in R^{-1}(y^i) \}
\end{equation}
 If it holds that $R(\varphi(u,x^i))=y^i$ for a given pair $(x^i,y^i)$, it is said that the model~\eqref{eqn:control_system} memorized the pair $(x^i,y^i)$ at the control $u$. We have the following lemma to connect the space of control functions and the optimization problem. 
\begin{lemma}
If $u \in \bigcap_{i=1}^q U(x^i, y^i)$, then $\tilde{\mathcal{J}}(u, \mathcal{X}) = 0$.
\end{lemma}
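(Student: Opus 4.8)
The plan is to show that membership in the intersection $\bigcap_{i=1}^q U(x^i,y^i)$ forces, for every index $i$, both the residual $r^u_i$ and the optimal disturbance $\varepsilon^*_i(u)$ to vanish, so that each summand of $\tilde{\mathcal{J}}(u,\mathcal{X})$ in \eqref{eqn:updated_cost} is identically zero. First I would unpack the hypothesis. By definition of $U(x^i,y^i)$, the condition $u \in \bigcap_{i=1}^q U(x^i,y^i)$ means $\varphi(u,x^i) \in R^{-1}(y^i)$, i.e. $R(\varphi(u,x^i)) = y^i$, for every $i \in \mathcal{I}$. Hence the residual $r^u_i = R(\varphi(u,x^i)) - y^i$ vanishes for all $i$, and the per-sample cost satisfies $\mathcal{J}_i(u) = \|r^u_i\|^2 = 0$.

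The crux is to argue that $\varepsilon^*_i(u) = 0$ whenever $r^u_i = 0$. I would not use the closed-form expression \eqref{eqn:epsilon_*} directly, since both its numerator and its normalizing denominator vanish at $r^u_i = 0$; instead I would return to the defining property of $\varepsilon^*_i(u)$ as the maximizer of the first-order robust cost \eqref{eqn:robust_cost_L}. Substituting $r^u_i = 0$ and writing $R(\mathcal{L}_{(u,x^i)}(\varepsilon)) = \int_0^1 m^u_i(\tau)\varepsilon(\tau)\,d\tau$ collapses \eqref{eqn:robust_cost_L} to the quadratic form
\[
\tilde{\mathcal{J}}^{\mbox{Robust}}_i(u,\varepsilon) = \langle \varepsilon,\, (\mathcal{K}^u_i - \lambda_1 I)\varepsilon \rangle,
\]
where I have used $\|R(\mathcal{L}_{(u,x^i)}(\varepsilon))\|^2 = \langle \varepsilon, \mathcal{K}^u_i \varepsilon\rangle$, which follows from the definition of the kernel $K^u_i(t,\tau) = m^u_i(t)^\top m^u_i(\tau)$ and the operator \eqref{eqn:operator_K}. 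Since $\mathcal{K}^u_i$ is a positive semi-definite Gram-type operator whose operator norm is finite under the uniform boundedness assumptions on $\tfrac{\partial f}{\partial u}$, $\Phi_{(u,x^i)}$ and $R$, choosing $\lambda_1$ larger than this bound (exactly the ``$\lambda_1$ large enough'' hypothesis of Proposition~\ref{prop:optimal}) renders $\mathcal{K}^u_i - \lambda_1 I$ negative definite. The quadratic form is then strictly negative for $\varepsilon \neq 0$ and attains its unique maximum value $0$ at $\varepsilon = 0$; as $0$ belongs to the compact subspace $\mathcal{V}^*_\rho$, it is the maximizer, so $\varepsilon^*_i(u) = 0$.

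Finally I would assemble the pieces: for each $i$ the summand in \eqref{eqn:updated_cost} equals $\mathcal{J}_i(u + \varepsilon^*_i(u)) - \lambda_1 \|\varepsilon^*_i(u)\|_2^2 = \mathcal{J}_i(u) - 0 = 0$, whence $\tilde{\mathcal{J}}(u,\mathcal{X}) = \sum_{i=1}^q 0 = 0$. The only genuine obstacle is the degeneracy of \eqref{eqn:epsilon_*} at $r^u_i = 0$, where the optimal disturbance migrates from the boundary $\|\varepsilon\|_\infty = \rho$ (its location when $r^u_i \neq 0$) to the interior point $0$; resolving this by appealing to the maximizer's variational definition together with the negative-definiteness of $\mathcal{K}^u_i - \lambda_1 I$, rather than to the formula, is what makes the argument clean.
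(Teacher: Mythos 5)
Your proof is correct, and it follows the same skeleton as the paper's own proof: the hypothesis forces every residual $r^u_i$ to vanish, hence $\varepsilon^*_i(u)=0$, hence every summand of \eqref{eqn:updated_cost} is zero. The difference lies in how the middle step is justified, and there your version is more careful than the paper's. The paper simply asserts that $r^u_i=0$ ``gives $\varepsilon^*_i(u)=0$ by Proposition~\ref{prop:optimal},'' i.e., it reads the conclusion off the closed form \eqref{eqn:epsilon_*}; but, as you point out, that formula is indeterminate at $r^u_i=0$, since both the numerator and the normalizing denominator vanish, so the citation alone leaves a small gap. Your fix---returning to the variational characterization of $\varepsilon^*_i(u)$ as the maximizer of \eqref{eqn:robust_cost_L}, observing that at zero residual the objective collapses to the quadratic form $\langle \varepsilon, (\mathcal{K}^u_i - \lambda_1 I)\varepsilon\rangle$, and invoking negative definiteness of $\mathcal{K}^u_i - \lambda_1 I$ once $\lambda_1$ exceeds the finite operator norm of the positive semi-definite Gram operator $\mathcal{K}^u_i$---is exactly the missing ingredient, and it is consistent with the second-variation computation the paper itself performs inside the proof of Proposition~\ref{prop:optimal}, where $\delta^2 \tilde{\mathcal{J}}^u_i\lvert_{\varepsilon}(\eta) = \|\bar{\mathcal{L}}(\eta)\|_2^2 - \lambda_1\|\eta\|_2^2 < 0$ for large $\lambda_1$. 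In short, what your route buys is rigor at the one point where the paper's one-line argument degenerates to a $0/0$ expression; what the paper's route buys is brevity, at the cost of implicitly asking the reader to resolve that degeneracy in precisely the way you did explicitly.
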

\begin{proof}
For $u \in \bigcap_{i=1}^q U(x^i, y^i)$, we have $R(\varphi(u, x^i)) = y^i$ for all $i \in \mathcal{I}$, which gives $\varepsilon^*_i(u) = 0$ by Proposition~\ref{prop:optimal}. Hence, $\tilde{\mathcal{J}}_i^{\text{Robust}}(u, 0) = 0$ and $\tilde{\mathcal{J}}(u, \mathcal{X}) = \sum_{i=1}^q \tilde{\mathcal{J}}_i^{\text{Robust}}(u, 0)$.
\end{proof}
This result shows that, under the given conditions, the optimization problem reduces to finding a control $u$ that exactly matches the given pairs via end-point mapping.

The sufficient condition for the existence of $u \in \bigcap_{i=1}^q U(x^i, y^i)$ has been studied as an application of the Chow-Rashevsky theorem~\cite{brockett2014early} in~\cite[Lemma 1]{bayram2024control}. 
In contrast, here we propose an iterative algorithm that relies on the differentiable properties of the set $\bigcap_{i=1}^q U(x^i, y^i)$. These differentiability properties were analyzed in our earlier work~\cite{bayram2025geometric} under the following assumptions:

{
\textbf{A.1:} The set of control vector fields of the $q$-folded system of the model~\eqref{eqn:control_system} is bracket-generating in $(\mathbb{R}^{\bar{n} })^{(q)}$~\cite[Lemma 1]{bayram2024control}. 

\textbf{A.2:}  The system~\eqref{eqn:control_system} has the Linearized Controllability Property at all $x^i \in \mathcal{X}$~\cite[Definition 2]{bayram2025geometric}. 

\textbf{A.3:} The sets $U(x^i,y^i)$ and $U(x^j,y^j)$ intersect transversally for any pair $(x^j,y^j),(x^i,y^i) \in (\mathcal{X},\mathcal{Y})$.
\begin{thm}
\cite[Thm 1]{bayram2025geometric}\label{thm:control_w_smp} 
Suppose that assumptions (A.1), (A.2) and (A.3) holds and let $(\mathcal{X},\mathcal{Y})$ be a paired set of cardinality $q$. Then, the space of controls $\cap_{i=1}^j U(x^i,y^i)$ is a Banach submanifold of $\cV$ of finite-codimension for $1\leq j \leq q$.
\end{thm}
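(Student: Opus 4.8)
The plan is to exhibit $\bigcap_{i=1}^j U(x^i,y^i)$ as the zero set of a $C^1$ submersion into a finite-dimensional space, and then invoke the implicit function (submersion) theorem for Banach spaces. Concretely, for each $i$ I would define the endpoint-defect map $F_i:\cV\to\mathbb{R}^{n_o}$ by $F_i(u)=R(\varphi(u,x^i))-y^i$, so that $U(x^i,y^i)=F_i^{-1}(0)$, and assemble $F:=(F_1,\dots,F_j):\cV\to(\mathbb{R}^{n_o})^j$, whose zero set is exactly the intersection. Since the target is finite-dimensional, it suffices to show that $0$ is a regular value of $F$, i.e. that the Fréchet derivative $DF(u)$ is surjective at every $u$ in the intersection; the kernel is then closed of finite codimension $jn_o$ and hence automatically complemented in $\cV$, so the submersion theorem yields a Banach submanifold of codimension $jn_o$.

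First I would establish that each $F_i$ is $C^1$ with derivative $DF_i(u)\,v = R\big(\mathcal{L}_{(u,x^i)}(v)\big)$. This follows from the smooth dependence of the flow $\varphi$ on the control together with the uniform bounds on $\partial f/\partial x$ and $\partial f/\partial u$ assumed in the Preliminaries: the first-order variation of $\varphi(u,x^i)$ in the direction $v$ is precisely $\mathcal{L}_{(u,x^i)}(v)$ as recorded in~\eqref{eqn:operator_L}, and composing with the bounded linear map $R$ gives the stated derivative. Surjectivity of each $DF_i(u)$ onto $\mathbb{R}^{n_o}$ is then immediate from assumption (A.2): the Linearized Controllability Property at $x^i$ says the reachability operator $\mathcal{L}_{(u,x^i)}$ is onto $\mathbb{R}^{\bar n}$, and since $R$ has full row rank the composition $R\circ\mathcal{L}_{(u,x^i)}$ is onto $\mathbb{R}^{n_o}$. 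This already settles the base case $j=1$: each $U(x^i,y^i)$ is a Banach submanifold of codimension $n_o$.

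The crux --- and the step I expect to be the main obstacle --- is upgrading the individual surjectivity of the $DF_i(u)$ to surjectivity of the assembled derivative $DF(u)$ onto $(\mathbb{R}^{n_o})^j$, since in infinite dimensions the pairwise independence of the blocks does not by itself force the full map to be onto. I would proceed by induction on $j$, writing $M_{j-1}=\bigcap_{i<j}U(x^i,y^i)$ with $T_uM_{j-1}=\bigcap_{i<j}\ker DF_i(u)$, and arguing that $U(x^j,y^j)$ meets $M_{j-1}$ transversally, i.e. $\ker DF_j(u) + T_uM_{j-1}=\cV$, which is equivalent to surjectivity of $(DF_1,\dots,DF_j)(u)$ and hence to $M_j=M_{j-1}\cap U(x^j,y^j)$ being a submanifold whose codimension adds up to $jn_o$. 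Here assumption (A.3) supplies the transversality of each pair, while the simultaneous-controllability content of (A.1) --- bracket-generation of the $q$-folded system --- is what guarantees that the kernels $\ker DF_i(u)$ remain jointly independent as $i$ ranges, so the running intersection retains the full codimension rather than collapsing. Carefully translating (A.1) and (A.3) into this joint-surjectivity statement, and verifying that transversality of $U(x^j,y^j)$ persists against the \emph{running} intersection $M_{j-1}$ at every point (not merely pairwise), is the technical heart of the argument; once it is in place, the submersion theorem delivers the finite-codimension Banach submanifold claim for all $1\le j\le q$.
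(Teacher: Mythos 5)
First, a point of comparison you could not have known: this paper does not prove Theorem~\ref{thm:control_w_smp} at all --- it is imported verbatim from the authors' earlier work \cite[Thm 1]{bayram2025geometric}, and Section~\ref{sec:proof} only proves Lemma~\ref{lem:reach_set}, Proposition~\ref{prop:optimal}, and Lemma~\ref{lem:variation}. So there is no in-paper proof to measure your attempt against; what follows judges your proposal on its own terms.

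Your scaffolding is the natural (and almost certainly the intended) one: realize each $U(x^i,y^i)$ as $F_i^{-1}(0)$ for the endpoint-defect map $F_i(u)=R(\varphi(u,x^i))-y^i$, identify $DF_i(u)=R\circ\mathcal{L}_{(u,x^i)}$, obtain surjectivity of each block from (A.2) together with the full row rank of $R$, and note that a closed kernel of finite codimension is automatically split, so the Banach submersion theorem applies. The genuine gap is exactly where you flag it, and you do not close it: upgrading blockwise surjectivity to surjectivity of the stacked map $(DF_1,\dots,DF_j)(u)$ --- equivalently, transversality of $U(x^j,y^j)$ to the \emph{running} intersection $M_{j-1}$ --- is asserted to follow from ``carefully translating (A.1) and (A.3),'' but no such translation is supplied, and pairwise transversality (which is all that (A.3) literally states) does not by itself yield the transversality-to-the-running-intersection that your induction step needs. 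Moreover, your proposed source for the joint independence, assumption (A.1), is the wrong tool: bracket generation of the $q$-folded system is a \emph{nonlinear} controllability hypothesis, which this paper invokes (via Chow--Rashevsky) only for the existence of a control in $\bigcap_i U(x^i,y^i)$; it says nothing directly about the rank of the linearized operators $\mathcal{L}_{(u,x^i)}$ at a given $u$. The statement that would actually close your induction is the linearized controllability property (A.2) read on the $q$-folded system: along a common control $u$, the linearization of the folded dynamics block-diagonalizes into the individual LTV systems of~\eqref{eqn:defn_ltv}, so LCP of the folded system at $(x^1,\dots,x^j)$ \emph{is} the joint surjectivity of $v\mapsto\bigl(\mathcal{L}_{(u,x^1)}(v),\dots,\mathcal{L}_{(u,x^j)}(v)\bigr)$. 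As written, however, (A.2) is assumed only pointwise at each $x^i$, so you must either adopt that strengthened folded-system reading of (A.2), or read (A.3) as transversality of the whole family along running intersections; with only the literal pointwise/pairwise assumptions, your induction has a hole.
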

This implies that the set of controls that satisfy the memorization property for $(\mathcal{X},\mathcal{Y})$ possesses a smooth geometry of a Banach submanifold. Now, we can introduce our algorithm.

\paragraph{Iterative Training of Robust nODE}

Here, we introduce our iterative learning algorithm. We denote the value of the control function (model parameters) at a given k$th$ iteration by $u^k$.

\noindent \textbf{Outer Loop-Expanding Ensemble:} We begin the algorithm with $j=0$. We progressively add the next data points $(x^{j+1}, y^{j+1})$ into the ensemble, expanding $\mathcal{X}^j$ to $\mathcal{X}^{j+1}$ in the outer loop once the inner loop achieves convergence, (i.e., $ \mathcal{J}_j(u^k+\varepsilon^*_j(u^k))$ is zero). From assumption (A.1) (i.e. LCP at $x^i$), we know that $m^u_{i}(t)$ in~\eqref{eqn:epsilon_*} is nonzero for $t\in[0,1]$. It implies that $\varepsilon_i^*(u)$ is zero only if the model~\eqref{eqn:control_system} has memorized the pair $(x^i,y^i)$ at the control $u$. Therefore, at every iteration of the outer-loop (expansion of sub-ensamble), it holds that 
$ R(\varphi( u^{k} , x^i)) = y^i   \mbox{ for all } i \leq j$.

\noindent \textbf{Inner Loop-Fixing End-Point Mapping:} Consider the new pair $(x^{j+1},y^{j+1})$. The goal of inner loop is  to minimize the per-sample cost functional for the new pair $(x^{j+1},y^{j+1})$ at $u^k + \varepsilon^*_{j+1}(u^k)$ by updating from \(u^k\) to $u^{k+1} (:= u^k + \delta u^k)$. To be more precise, the update $\delta u^k$ is chosen to ensure that 
\begin{align}\label{eqn:second_cond}
    \mathcal{J}_{j+1}(u^{k+1} +  \varepsilon^*_{j+1}(u^{k+1})) \leq \mathcal{J}_{j+1}(u^{k} +  \varepsilon^*_{j+1}(u^k))
\end{align}
We solve this problem using gradient descent. To this end, we define the first-order variation of the  per-sample cost functional $\mathcal{J}_{i}(u)$ at a control $u$ in $\delta u$ as $D_{\delta u} {\mathcal{J}}_{i}(u):= {\mathcal{J}}_{i}(u+\delta u)-\mathcal{J}_i(u)$. Then, we have the following: 
\begin{equation}
    D_{\delta u}{\mathcal{J}}_i(u) := R^\top (\delta  \varphi_t(u,x^{i} ))\left( R( \varphi(u,x^{i})) - y^{i} \right)
\end{equation}
where we recall that $\delta \varphi_t(u, 
 x^i)=\varphi_t(u+ \delta u, x^i)-\varphi_t(u,x^i)$.

However, we propose an {\em iterative} algorithm (in which the training points are introduced sequentially one at a time by the outer loop). Therefore, we propose to fix the per-sample cost functional for all pairs $(x^i,y^i)\in(\mathcal{X}^j,\mathcal{Y}^j)$ (i.e. previously learned points in earlier iterations of the outer loop) at $u^k+\varepsilon^*_i(u^k)$. To be more precise,
\begin{align}\label{eqn:fix_cost}
     \mathcal{J}_{i}( u^{k+1} + \varepsilon_i^*(u^{k+1}) ) =  \mathcal{J}_{i}( u^k + \varepsilon_i^*(u^k) ), \forall i\leq j
\end{align}
From the definition of per-sample cost functional, we can rewrite~\eqref{eqn:fix_cost} as follows:
\begin{align}\label{eqn:fixed_endpoint}
\!R(\varphi(u^{k+1}\!+\!\varepsilon^*_i(u^{k+1}), x^i))\!=\!R(\varphi(u^k\!+\!\varepsilon^*_i(u^k), x^i))
\end{align}
for all $i \leq j$. Then, we propose to select $\delta u^{k}$ such that~\eqref{eqn:second_cond} and~\eqref{eqn:fixed_endpoint} hold. To characterize such $\delta u^k$, we consider the first-order variation of endpoint $R(\varphi(\cdot,x^i))$ at control $u^k+\varepsilon^*_i(u^k)$: 
\begin{multline}\label{eqn:variation_on_perturb}
\!\delta R(\varphi(u^k\!+\!\varepsilon^*_i(u^k),x^i))\!:=\!R(\varphi(u^k+\delta u^k +\varepsilon^*_i({u^k\!+\!\delta u^k}),x^i))-R(\varphi(u^k +\varepsilon^*_i(u^k),x^i))
\end{multline}
Then, we have the following lemma:
\begin{lemma}\label{lem:variation} For a given control $u$ and a pair $(x^i,y^i)$, if it holds that $R(\varphi(u,x^i))=y^i$, then we have 
$$\delta R(\varphi(u+\varepsilon^*_i(u),x^i))  = R ( \mathcal{L}_{(u,x^i)}(\delta u) )$$
where $ \mathcal{L}_{(u,x^i)}(\cdot)$ is given in~\eqref{eqn:operator_L}.
\end{lemma}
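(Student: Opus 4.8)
The plan is to exploit the structural fact, established in Proposition~\ref{prop:optimal}, that the optimal disturbance carries the residual $r^u_i = R(\varphi(u,x^i)) - y^i$ as a factor in its numerator. The hypothesis $R(\varphi(u,x^i)) = y^i$ gives $r^u_i = 0$, hence $\varepsilon^*_i(u) = 0$. I would first substitute this into the definition~\eqref{eqn:variation_on_perturb}: the subtracted term collapses to $R(\varphi(u + \varepsilon^*_i(u), x^i)) = R(\varphi(u, x^i)) = y^i$, so the variation reduces to $\delta R(\varphi(u+\varepsilon^*_i(u),x^i)) = R(\varphi(u + \delta u + \varepsilon^*_i(u + \delta u), x^i)) - y^i$.

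Next I would expand the remaining term to first order in the total control perturbation $h := \delta u + \varepsilon^*_i(u + \delta u)$. Because the base control is $u + \varepsilon^*_i(u) = u$, which is exactly the control at which the operator $\mathcal{L}_{(u,x^i)}$ is defined, the linearization recalled in~\eqref{eqn:disturbance_cost} (that is, \cite[Lemma 2]{bayram2024control} together with the definition of $\mathcal{L}_{(u,x^i)}$) gives $R(\varphi(u + h, x^i)) = y^i + R(\mathcal{L}_{(u,x^i)}(h))$ up to first order. Using linearity of both $\mathcal{L}_{(u,x^i)}$ and $R$, this splits as $\delta R = R(\mathcal{L}_{(u,x^i)}(\delta u)) + R(\mathcal{L}_{(u,x^i)}(\varepsilon^*_i(u + \delta u)))$, so the claim is equivalent to the vanishing, at first order, of the second summand.

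The main obstacle is precisely the control of $\varepsilon^*_i(u + \delta u)$. Since $\varepsilon^*_i(u) = 0$, I would read off from the closed form~\eqref{eqn:epsilon_*} that $\varepsilon^*_i$ depends on $u$ only through the residual $r^u_i$, which itself satisfies $r^{u + \delta u}_i = R(\mathcal{L}_{(u,x^i)}(\delta u)) + o(\|\delta u\|)$; hence $\varepsilon^*_i(u + \delta u)$ is driven by a first-order quantity and carries no zeroth-order part. The delicate point is that the contribution $R(\mathcal{L}_{(u,x^i)}(\varepsilon^*_i(u + \delta u)))$ must be argued to lie below first order for the purpose of this variation, rather than merely being small. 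I would attempt to establish this by combining the vanishing of $\varepsilon^*_i$ at $u$ with the large-$\lambda_1$ regularization assumed in Proposition~\ref{prop:optimal}, under which the regularized maximizer is suppressed relative to the residual it tracks, so that only $R(\mathcal{L}_{(u,x^i)}(\delta u))$ survives. Collecting the surviving term yields $\delta R(\varphi(u + \varepsilon^*_i(u), x^i)) = R(\mathcal{L}_{(u,x^i)}(\delta u))$.

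Two routine checks remain. First, that $v \mapsto R(\varphi(v, x^i))$ is Fréchet differentiable with derivative $R \circ \mathcal{L}_{(v,x^i)}$, which follows from the standing assumption that $\partial f/\partial x$ and $\partial f/\partial u$ are uniformly bounded along the trajectory. Second, that $u \mapsto \varepsilon^*_i(u)$ is continuous at $u$ with value $0$, so that the base point of the linearization does not move at zeroth order. I expect the only genuinely non-mechanical step to be the justification that the re-evaluated disturbance $\varepsilon^*_i(u + \delta u)$ does not enter the first-order variation; the remainder is a direct application of the linearization identity~\eqref{eqn:disturbance_cost}.
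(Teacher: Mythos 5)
Your reduction correctly isolates the crux: using $r^u_i=0$, hence $\varepsilon^*_i(u)=0$, and linearity of $R$ and $\mathcal{L}_{(u,x^i)}$, the lemma becomes the claim that $R\big(\mathcal{L}_{(u,x^i)}(\varepsilon^*_i(u+\delta u))\big)$ contributes nothing at first order. But this is exactly the step you do not prove --- you flag it as delicate and say you ``would attempt'' to establish it via large-$\lambda_1$ suppression, and that sketch cannot work as stated. First, the closed form \eqref{eqn:epsilon_*} is \emph{normalized}: whenever $r^{u+\delta u}_i \neq 0$, the maximizer satisfies $\|\varepsilon^*_i(u+\delta u)\|_\infty = \rho$ exactly, no matter how small $\delta u$ is. So $\varepsilon^*_i(u+\delta u)$ has a genuine zeroth-order part (only its direction is driven by $r^{u+\delta u}_i$), the map $u \mapsto \varepsilon^*_i(u)$ is \emph{not} continuous at a control where the residual vanishes (your second ``routine check'' is false under \eqref{eqn:epsilon_*}), and your assertion that the re-evaluated disturbance ``carries no zeroth-order part'' contradicts the formula you cite. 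Second, even if one drops the normalization and uses the unnormalized stationary solution $(\mathcal{K}^u_i - \lambda_1 I)^{-1} m^u_i(t)^\top r^u_i$, your own estimate $r^{u+\delta u}_i = R(\mathcal{L}_{(u,x^i)}(\delta u)) + o(\|\delta u\|)$ makes this term \emph{first order} in $\delta u$, merely scaled by $O(1/\lambda_1)$; suppression by a fixed $\lambda_1$ is not vanishing at first order, so the exact identity claimed in the lemma does not follow. In short, the one non-mechanical step of your proof is missing, and the heuristic offered in its place is inconsistent with the paper's own formula for $\varepsilon^*_i$.

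For comparison, the paper meets the same obstacle with a different (and load-bearing) device: it linearizes the perturbed dynamics around $\hat{u}=u+\varepsilon^*_i(u)$, writes the forcing term as $\frac{\partial f}{\partial u}\big(\delta u + \delta\varepsilon^*_i\lvert_u(\delta u)\big)$ with $\delta\varepsilon^*_i\lvert_u(\delta u) := \varepsilon^*_i(u+\delta u)-\varepsilon^*_i(u)$, and then asserts $\delta\varepsilon^*_i\lvert_u(\delta u)=0$ by invoking the first-order optimality condition satisfied by the maximizer in Proposition~\ref{prop:optimal} --- an envelope-type stationarity claim about the argmax itself --- after which the variation-of-constants formula and $\varepsilon^*_i(u)=0$ yield $R(\mathcal{L}_{(u,x^i)}(\delta u))$. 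Whatever its own merits, that stationarity assertion is precisely what kills the term your proof leaves alive; to complete your argument you would need a statement of that kind (vanishing of the first-order variation of the argmax at controls where the residual is zero), not a magnitude bound in $\rho$ or $1/\lambda_1$.
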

}
See Section~\ref{sec:proof} for the proof of Lemma~\ref{lem:variation}. Suppose that the model~\eqref{eqn:control_system} has already memorized all pairs $(x^i,y^i) \in (\mathcal{X}^j,\mathcal{Y}^j)$. Then, if we select $\delta u^k$ such that
$$\delta R(\varphi(u^k+\varepsilon^*_i(u^k),x^i))=0, \forall(x^i,y^i) \in (\mathcal{X}^j,\mathcal{Y}^j),$$
then the update $\delta u^k$ satisfies~\eqref{eqn:fixed_endpoint}. From Lemma~\ref{lem:variation}, we define the kernel of $R(\mathcal{L}_{(u,x^i)}(\cdot))$ to characterize the set of such updates $\delta u^k$ as follows:
$${\ker}(u,x^i):=\operatorname{span}\{\delta u\in \cV \mid  R(\mathcal{L}_{(u,x^i)}(\delta u)) = 0 \}$$ 
In words, ${\ker}(u,x^i)$ is the set of functions $\delta u^k$ that makes $\delta R(\varphi(u^k+\varepsilon^*_i(u^k),x^i))$ zero. By construction, $\mathcal{L}_{(u,x^i)}(\cdot)$ is an operator from infinite dimension space $\cV$ to $\mathbb{R}^{\bar{n}}$ and the readout map $R$ has full row rank Jacobian. Therefore, we have an infinite-dimensional kernel. We define the intersection of ${\ker}(u,x^i)$ for all $i \leq j$ as follows:
\begin{align*}
   {\ker}(u,\mathcal{X}^j):=\operatorname{span}\{ \delta u \in \cV \mid \delta u \in\bigcap_{ x^i\in \mathcal{X}^j}{\ker}{(u,x^i)} \}  
\end{align*}

We define the projection of $D_{\delta u} \mathcal{J}_{j+1}(u+\varepsilon_i^*(u))$ on a given subspace of functions ${\ker}(u,\mathcal{X}^j)$, denoted by $\operatorname{proj}_{{\ker}(u,\mathcal{X}^j)}D_{\delta u} \mathcal{J}_{j+1}(u+\varepsilon_i^*(u))$, as the solution of the following optimization problem:
\begin{equation}
   {\arg\min}_{d(t) \in {\ker}(u,\mathcal{X}^j)} \int_0^1  | d(\tau) -  D_{\delta u} \mathcal{J}_{j+1}(u+\varepsilon_i^*(u)) |^2 d\tau 
\end{equation}

Now, we propose the selection of $\delta u^k$ as the projection of $D_{\delta u}\mathcal{J}_{j+1}(u^k+\varepsilon_i^*(u^k))$ onto $ker(u^k,\mathcal{X}^j)$ so that the update $\delta u^k$ satisfies both~\eqref{eqn:fixed_endpoint} and~\eqref{eqn:second_cond}; to be more precise,
\begin{align}\label{eqn:proj}
    \delta u^k := \operatorname{proj}_{{\ker}(u^k,\mathcal{X}^j)}D_{\delta u} \mathcal{J}_{j+1}(u^k+\varepsilon_i^*(u^k))
\end{align}

From Theorem~\ref{thm:control_w_smp}, under the given conditions (A.1),(A.2),(A.3), we have that $\cap_{i=1}^j U(x^i,y^i)$ is a finite-codimension Banach submanifold of $\cV$ for finite $j$, that is, the set ${{\ker}(u,\mathcal{X}^j)}$ is closed and continuous. Then, the projection $\operatorname{proj}_{{\ker}(u,\mathcal{X}^j)}D_{\delta u} \mathcal{J}_{j+1}(u)$ is well defined. This guarantees that the inner loop can always find an update $\delta u^k$ that satisfies~\eqref{eqn:proj}. This iterative update guarantees that the per-sample cost for all previously seen data points (i.e. $i \leq j$) remains fixed at their value for $u^k + \varepsilon^*_{i}(u^k)$ while per-sample cost for the new point is gradually reduced.

\paragraph{Numerical Method for Iterative Training of Robust nODE} 
\begin{algorithm}
\small
\caption{Iterative Training of Robust nODE}\label{alg:kernel_gradient}
\begin{algorithmic}[1]
\STATE \textbf{Initialize} {$u \gets u^0,$} $L_j \gets 0  \mbox{ for } j=1,\cdots,q$ 
\STATE  $L \gets [L_1 ; L_2; \cdots; L_q ]$ 
\FOR{$j=1$ to $q$}  
            \REPEAT            
            \STATE $L_\ell \gets R\big(\mathcal{L}_{(u, x^\ell)}\big)$, $\ell = 1, \dots, j$ \textit{(see Algorithm~1 in~\cite{bayram2024control})}

            \STATE $\mbox{Update the corresponding block } L_\ell \mbox{ in } L,\forall \ell \in \mathcal{I}^{j-1}$ 

           \STATE $\gamma \gets ( L_j^\top L_j - {\lambda}_1 I)^{-1} D_{\delta u} \mathcal{J}_{j}(u)$ 
            \STATE $\varepsilon_j(u) \gets \rho \gamma /||\gamma||_\infty$
            \STATE $u \gets u - \alpha^k \operatorname{proj}_{\mathcal{N}(L)}  D_{\delta u}\mathcal{J}_{j}(u+\varepsilon_j(u)) $ 
            \UNTIL{\textbf{convergence}}
\ENDFOR
\RETURN $L,u$
\end{algorithmic}
\end{algorithm}

Now, we provide details for our numerical method and show how to implement~\eqref{eqn:proj} iteratively. We let $u^k  \in R^{pN}$ be the discretization and vectorization of a time-varying control function. Let \( L_i \in \mathbb{R}^{n_o \times pN} \) be the output of Algorithm~1 in~\cite{bayram2024control}, which computes a numerical approximation of \( R\big( \mathcal{L}_{(u,x^i)}(\cdot) \big) \) for a given control \( u^k \) and initial point \( x^i \).

\textbf{Inner Maximization:} The matrix \( L_i \) is equal to the discretization of $m^u_{i}(t) = R( \Phi_{(u^k,x^i)}(1,t)\frac{\partial f(x,u)}{\partial u}$) over time $t \in [0,1]$. Then, the kernel \( \mathcal{K}^u_i(s,t) \) can be computed as \( L_i^\top L_i  \in \mathbb{R}^{pN \times pN}\) by discretizing the integral in~\eqref{eqn:operator_K}. In Lines 7 and 8 of Algorithm~\ref{alg:kernel_gradient}, we compute the optimal disturbance $\varepsilon^*_{i}(u^k)$ for the pair $(x^i,y^i)$ and we scale it to fit $\mathcal{V}^*_\rho$.

We note that $\varepsilon^*_i(u^k)$ is non-zero unless $R(\varphi(u^k,x^i))=y^i$. We perturb the control (parameters) in the direction of the worst-case adversarial perturbation within a small neighborhood $\cV_\rho$. As a result, the gradient of the per-sample cost functional is computed at $u^k+\varepsilon^*_i(u^k)$ in Line 9 of Algorithm~\ref{alg:kernel_gradient} while the forward equation (prediction) of the model is evaluated at $u^k$. This mismatch between  where the cost functional and the gradient are evaluated is a key ingredient for the search of a flat minima. 

\textbf{Kernel Projection:} We have $R(\mathcal{L}_{(u^k,x^i)}(\delta u^k)) \approx L_i \delta u^k$, which implies that the right kernel of the matrix \( L_i \) is the numerical approximation of \( \operatorname{\ker}(u^k, x^i) \).  Then, we column-wise concatenate $L_\ell$ for $\ell \leq j$ (previously seen points) and place the concatenated matrix into the first block of $L\in\mathbb{R}^{n_oq \times pN}$ and fill the remaining entries with zeros. One can easily see that the right kernel of the matrix $L$, denoted by $\mathcal{N}(L)$, is a numerical approximation of ${\ker}{(u^k,\mathcal{X}^j)}$ for a given $j$. Then, we apply the kernel projection given in~\eqref{eqn:proj} in Line 9 of Algorithm~\ref{alg:kernel_gradient} where $\alpha^k$ is the learning rate at the iteration $k$. Once the inner loop (see Line 5-9) converges, we iterate to the next points by Line 3 (i.e. outer loop iteration).

\section{Proof of Main Results}\label{sec:proof}

\begin{proof}[Proof of Lemma~\ref{lem:reach_set}] Recall~\eqref{eqn:operator_L}, consider the operator $\mathcal{L}_{(u,x^i)}(v)$ mapping a control $v \in \cV$ to the state at time $1$ for LTV system in~\eqref{eqn:defn_ltv}. Since $\Phi_{(u,x^i)}(1,\tau) $ and $ \frac{\partial f(x,u)}{\partial u}$ are uniformly bounded, there exist constants $A_M$ and $B_M$ such that $
\|\Phi_{(u,x^i)}(1,\tau)\|_\infty \leq A_M$ and $\| \frac{\partial f(x,u)}{\partial u}\|_\infty \leq B_M$ for almost every $\tau \in [0,1]$. Then,
\begin{align*}
\|\mathcal{L}_{(u,x^i)}(v)\|_\infty &\leq A_M B_M \int_0^1 ||v||_\infty d\tau.
\end{align*}

This shows that the linear operator ${\mathcal{L}_{(u,x^i)}}(v)$ is a bounded operator from a Banach space $\cV$ to $\mathbb{R}^n$. Next, observe that the image $\mathcal{L}_{(u,x^i)}(\mathcal{V}_\rho)$ is a subset of $\mathbb{R}^n$, and hence its dimension, say $k$, satisfies $k \leq n$, that is, $\mathcal{L}_{(u,x^i)}(\cdot)$ has finite rank $k\leq n$. Therefore, there exists a finite number of linearly independent unit norm functions $\{v_1, v_2, \ldots, v_k\} \subset \cV$ such that for any $v \in \mathcal{V}_\rho$, we have $\mathcal{L}_{(u,x^i)}(v) = \sum_{\ell=1}^k \alpha_\ell \mathcal{L}_{(u,x^i)}(v_\ell)$. Define the finite-dimensional subspace
\begin{align}\label{eqn:defn_v_rho}
    \mathcal{V}^*_\rho := \{ v(t) = \sum_{\ell=1}^k \alpha_i v_\ell(t) : a_i \in \mathbb{R} , ||\alpha||_\infty \leq \rho  \} 
\end{align}
By construction, $\mathcal{V}^*_\rho \subset \mathcal{V}_\rho$, and the operator $\mathcal{L}_{(u,x^i)}(\cdot)$ restricted to $\mathcal{V}^*_\rho$ has the same image as $\mathcal{L}_{(u,x^i)}(\cdot)$ on $\mathcal{V}_\rho$, i.e., $
\mathcal{L}_{(u,x^i)}({\mathcal{V}^*_\rho}) = \mathcal{L}_{(u,x^i)}({\mathcal{V}_\rho})$. The bound on the coefficients in~\eqref{eqn:defn_v_rho}  implies that the set of all possible coefficient vectors is bounded in $\mathbb{R}^k$. Moreover, the mapping that sends a coefficient vector $(\alpha_1, \dots, \alpha_k)$ to the function $v(t)$ is linear and continuous. Therefore, the image of a bounded (and closed) set under this continuous mapping is itself closed. Thus, $\mathcal{V}^*_\rho$ is both bounded and closed in a finite-dimensional space, and by the Heine–Borel theorem, it is compact. 
\end{proof}

\begin{proof}[Proof of Proposition~\ref{prop:optimal}] 
For a fixed given $u$ and initial point $x^i$, we denote $R(\mathcal{L}_{(u,x^i)})(\cdot))$ by $\bar{\mathcal{L}}(\cdot)$. Recall~\eqref{eqn:robust_cost_L} as:
\begin{align}\label{eqn:robust_cost_L_prop}
    \tilde{\mathcal{J}}^{\mbox{Robust}}_i(u,\varepsilon) = \| r^u_i + \bar{\mathcal{L}}(\varepsilon) \|^2  - \lambda_1 ||\varepsilon||_2^2 
\end{align}
where $r^u_i = R(\varphi(u,x^i)) - y^i$. From Lemma~\ref{lem:reach_set}, for every $\varepsilon \in \mathcal{V}_\rho$, there exists a corresponding $\varepsilon^* \in \mathcal{V}^*_\rho$ satisfying $    {\mathcal{L}}_{(u,x^i)}(\varepsilon^*) = {\mathcal{L}}_{(u,x^i)}(\varepsilon)$. For any $\varepsilon \in \mathcal{V}_\rho$, decompose it as $\varepsilon = \varepsilon^* + a$, with $\varepsilon^* \in \mathcal{V}^*_\rho$ and $a \in \ker\big({\mathcal{L}}_{(u,x^i)}\big)$ so that $\mathcal{L}_{(u,x^i)}(a)=0$. Since our robust cost functional involves an $L^2$ norm in the regularization term, we now consider the $L^2$ inner product. We take $L^2$-projection of $\varepsilon$ onto the subspace spanned by $\{v_1, \dots, v_k\}$ (see~\eqref{eqn:defn_v_rho}). Then, by the projection theorem in Hilbert spaces (here, using the $L^2$ inner product), we have $    \|\varepsilon\|_2^2 = \|\varepsilon^* + a\|_2^2 = \|\varepsilon^*\|_2^2 + \|a\|_2^2$.
Then, we plug $\varepsilon=\varepsilon^*+a$ into \eqref{eqn:robust_cost_L}.
From Lemma~\ref{lem:reach_set}, it holds that $\bar{\mathcal{L}}(a)=0$. Then, we have: 
\begin{align}
\tilde{\mathcal{J}}^{\mbox{Robust}}_i(u,\varepsilon) = \|r^u_i + \bar{\mathcal{L}}(\varepsilon^*)\|^2 - \lambda_1 \Big(\|\varepsilon^*\|_2^2 + \|a\|_2^2\Big). 
\end{align}
Since \(\|a\|_2^2 \ge 0\), we have $\tilde{\mathcal{J}}^{\mbox{Robust}}_i(u,\varepsilon) \leq \tilde{\mathcal{J}}^{\mbox{Robust}}_i(u,\varepsilon^*).$
Thus, for any $\varepsilon \in \mathcal{V}_\rho$, there exists an $\varepsilon^* \in \mathcal{V}^*_\rho$ with the same linear map output and with a smaller (or equal) regularization penalty. Since $\mathcal{V}^*_\rho \subset \mathcal{V}_\rho$, the reverse inequality holds trivially. 
Hence, we obtain 
$$
\max_{\varepsilon \in \mathcal{V}^*_\rho} \tilde{\mathcal{J}}^{\mbox{Robust}}_i(u,\varepsilon) = \max_{\varepsilon \in \mathcal{V}_\rho} \tilde{\mathcal{J}}^{\mbox{Robust}}_i(u,\varepsilon).
$$
From which, we have a quadratic cost to maximize on a compact set $\mathcal{V}_{\rho}^*$. We denote $\tilde{\mathcal{J}}^{\mbox{Robust}}_i(u,\varepsilon)$ in~\eqref{eqn:updated_cost} for a fixed $u$ by $\tilde{\mathcal{J}}^u_i(\varepsilon)$. Now, we compute the the first variation of $\tilde{\mathcal{J}}^u_i$ at $\varepsilon$ as follows:
\begin{align}\label{eqn:first_var_robust}
   \delta \tilde{\mathcal{J}}^u_i\lvert_{\varepsilon}(\eta) = \lim_{\alpha \to 0} \frac{\tilde{\mathcal{J}}^u_i(\varepsilon+\alpha\eta) - \tilde{\mathcal{J}}^u_i(\varepsilon)}{\alpha} 
\end{align}
where $\eta \in \cV_\rho^*$. From linearity of $\bar{\mathcal{L}}(\cdot)$, we have 
 $$\tilde{\mathcal{J}}^u_i(\varepsilon+\alpha\eta) = \|r^u_i+ \bar{\mathcal{L}}(\varepsilon)+\alpha\bar{\mathcal{L}}(\eta)\|^2  -  \|\varepsilon + \alpha  \eta \|^2.$$ 
 For convenience, we denote $R(\Phi_{(u,x^i)}(1,t)\frac{\partial f(x,u)}{\partial u})$ by $m(t)$. Then, we plug $R(\mathcal{L}_{(u,x^i)}(\cdot))$ (see~\eqref{eqn:operator_L}) into~\eqref{eqn:first_var_robust} to arrive at: 
\begin{align*}
  \delta  \tilde{\mathcal{J}}^u_i\lvert_{\varepsilon}(\eta)  = &   (\int_0^1 m(t) \eta(t) dt )^\top r^u_i - \lambda_1 \int_0^1 \eta(t)^\top \varepsilon(t) dt \\
    &  + (\int_0^1 \eta(t) m(t) dt )^\top (\int_0^1 m(t)\varepsilon(t) dt )  
\end{align*}
Note that $R$ is linear and it is applied component-wise. For all admissible perturbations $\eta \in \cV^*_{\rho}$, the first-order necessary condition for optimality~\cite{liberzon2011calculus} requires $\delta  \tilde{\mathcal{J}}^u_i\lvert_{\varepsilon}(\eta) = 0$. We know that $m(t),\eta(t)$ and $\varepsilon(t)$ are bounded functions on $[0,1]$. Then, from Fubini–Tonelli 
Theorem~\cite{rudin1964principles}, we have that:
\begin{align}\label{eqn:fubini}
    \int_0^1\!\eta(t)^\top\!\left(\int_0^1\!m(t)^\top\!m(s)\varepsilon(s)ds-\!\lambda_1\varepsilon(t)\!-\!m(t)^\top r^u_i\right)\!dt\!=\!0 
\end{align}
for all $\eta(t) \in \mathcal{V}_{\rho}$. From the Fundamental Lemma of the Calculus of Variations~\cite[Lemma 2.1]{liberzon2011calculus} and 
~\eqref{eqn:operator_K}, we have a Fredholm integral equation of the second kind~\cite{kress1999linear}: 
\begin{align}
      {\lambda}  \int_0^1 {K}(t,s) \varepsilon(s) ds  - {\lambda}    m(t)^\top r^u_i  =\varepsilon(t)
\end{align}
with the kernel $K(t,s)= m(t)^\top m(s)$ where ${\lambda}  =1/{\lambda_1}$. 
Since the kernel $K$ is continuous on $0\leq s \leq t \leq 1$ and on $0\leq t \leq s \leq 1$, it is weakly singular. Then, under the assumption that $\lambda_1$ is sufficiently large, we have~\eqref{eqn:epsilon_*}.

Now, we compute the second variation of $\tilde{\mathcal{J}}^u_i(\varepsilon)$ as:
\begin{align}\label{eqn:second_order}
   \delta^2 \tilde{\mathcal{J}}^u_i\lvert_{\varepsilon}(\eta) &=   \bar{\mathcal{L}}^\top(\eta)\bar{\mathcal{L}}(\eta) -\lambda_1||\eta||_2^2 = ||  \bar{\mathcal{L}}(\eta) ||^2_2 - \lambda_1||\eta||_2^2 
\end{align}
We derive a quadratic form for the second-order variation of the functional. For sufficiently large $\lambda_1 > 0$, the second variation $\delta^2 \tilde{\mathcal{J}}^u_i\lvert_{\varepsilon}(\eta)$ is uniformly negative definite, i.e., $\delta^2 \tilde{\mathcal{J}}^u_i\lvert_{\varepsilon}(\eta) < 0$ for all non-zero perturbations $\eta$. This guarantees that {\em the second-order necessary condition for optimality} is satisfied for large $\lambda_1$. Moreover, since $\varepsilon$ and $\eta$ lie in a compact set $\mathcal{V}_{\rho^*}$, the sufficient condition for optimality is also satisfied. Then, the disturbance $\varepsilon$ in~\eqref{eqn:epsilon_*} is indeed an optimal solution over the compact set $\mathcal{V}_{\rho^*}$.
\end{proof}
\begin{proof}[Proof of Lemma~\ref{lem:variation}] For the sake of simplicity, we remove the time $t$ from the notation. For a small variation of the control function $u(\tau)$, denoted by $\delta u(\tau)$, we have the following:
\begin{align}\label{eqn:new_flow_dist}
    \dot{x} + \delta \dot{x}=   f\left( x + \delta x ,  u+ \delta u + \varepsilon^*_i(u + \delta u) \right)
\end{align}
Taking the {\em first} order Taylor expansion of~\eqref{eqn:new_flow_dist} around the trajectory $x(t)=\varphi_t(u+\varepsilon_i^*(u),x^i)$ and subtracting $\varphi_t(u+\varepsilon_i^*(u),x^i)$, we get
\begin{align}
     \delta \dot x = \frac{\partial f(x,\hat u )}{\partial x} \delta x  +   \frac{\partial f(x,\hat u)}{\partial u} (   \delta \varepsilon_i^*\lvert_u (\delta u) + \delta u  )
\end{align}
where $\hat u:=u +\varepsilon_i^*(u)$ and $\delta \varepsilon_i^*\lvert_u (\delta u):=\varepsilon_i^*(u + \delta u ) -  \varepsilon_i^*(u) $, that is, $\delta \varepsilon_i^*\lvert_u (\delta u)$ is the first-order variation of $\varepsilon^*_i$ at control function $u$. 
Note that $\varepsilon_i^*$ is a function of both $x^i$ and $u$. From Proposition~\ref{prop:optimal}, we know that the disturbance $\varepsilon^*_i$ is the maximum of~\eqref{eqn:robust_cost_L} at $u$ for the point $x^i$. By the first-order necessary condition of optimality~\cite{liberzon2011calculus}, we have that $\delta \varepsilon_i^*\lvert_u (\delta u)=0$ at $u$ for the point $x^i$. 
Using the variation of constants formula~\cite{liberzon2011calculus} for $z(t)=\delta \varphi_t(u,x^i)$, we have:
\begin{align}\label{eqn:const_variation}
    z(t)\!=\!\int_{0}^{t}\Phi_{(u+\varepsilon_i^*(u),x^i)}(t,\tau)\!\frac{\partial f(x,u\!+\!\varepsilon_i^*(u))}{\partial u}(\delta u\!)d\tau
\end{align}
where $x(\tau)=\varphi_{\tau}(u+\varepsilon_i^*(u),x^i)$. Once we plug that $R(\varphi(u,x^i))=y^i$ into~\eqref{eqn:epsilon_*}, we have $\varepsilon_i^*(u)=0$. From this, we have that \eqref{eqn:const_variation} is equal to $\mathcal{L}_{(u,x^i)}(\delta u)$ in~\eqref{eqn:operator_L}. This completes the proof.\end{proof}


\begin{figure}[htbp]
    \centering
    \begin{subfigure}{0.50\linewidth}
        \centering
        \includegraphics[width=\linewidth]{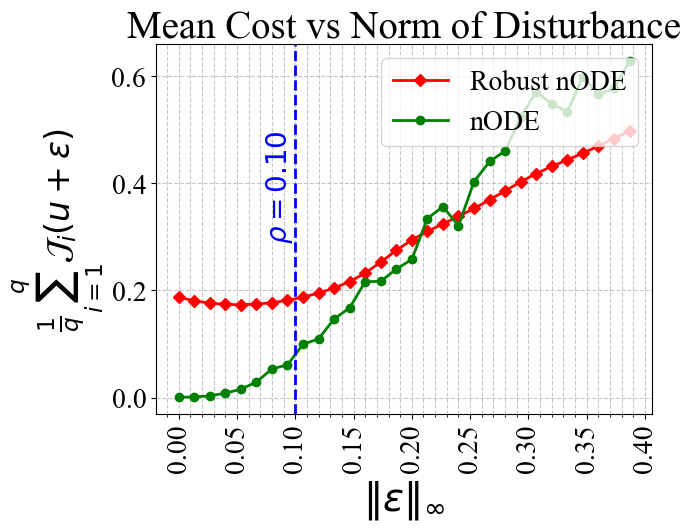}
        \caption{}
        \label{fig:cost_compare}
    \end{subfigure}
    \hfill
    \begin{subfigure}{0.44\linewidth}
        \centering
        \includegraphics[width=\linewidth]{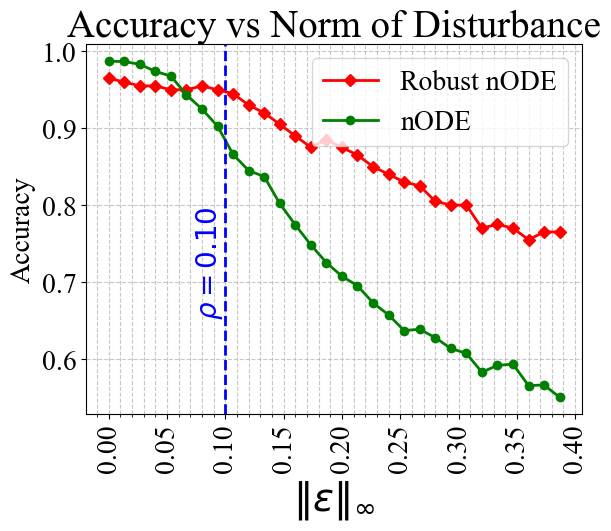}
        \caption{}
        \label{fig:acc_compare}
    \end{subfigure}

    \caption{(a) Average cost, and (b) classification accuracy as functions of the norm of the added disturbance $\|\varepsilon\|_\infty$. Red line and green line represent robust nODE and standard nODE, respectively.}
    \label{fig:plot} 
\end{figure}

\section{Numerical Results}
We design a (nonlinear) classification problem to evaluate our approach. The goal is to classify points $x_i \in \mathbb{R}^2$ as inside ($y^i=1$) or outside ($y^i=-1$) the disk of radius $r=0.5$.  
We use the uplift function $E: (x_1, x_2)^\top \mapsto (x_1, x_2, 0, 0, 0)^\top \in \mathbb{R}^5$ and define the readout map $R$ as the orthogonal projection onto the last coordinate: $R(x) = [0,0,0,0,1]\,x \in \mathbb{R}$. We consider a 100-layer residual neural network, corresponding to a discretization of $[0,1]$ with $\Delta t = 0.01$, which is an explicit Euler discretization of the following nODE:
\begin{align}\label{eqn:example_model}
    \dot{x}(t) = f(x(t), u(t)) = \tanh(W(t)x(t) + b(t)),
\end{align}  
where $\tanh$ acts component-wise, the state space is $\mathbb{R}^5$, and the control is $u(t) = (W(t), b(t)) \in L^\infty([0,1], \mathbb{R}^{5 \times 5} \times \mathbb{R}^5)$.

\textbf{Robust Neural ODE:} First, we train a model using our proposed \emph{iterative training of the Robust nODE algorithm}, outlined in Algorithm~\ref{alg:kernel_gradient}. We select \(\lambda_1 = 0.2\) and \(\rho = 0.1\). We call this model {\em Robust nODE}. 

 \textbf{Standard Neural ODE:} To provide a baseline for comparison, we train a second model without explicitly considering robustness. This model is trained to minimize the cost:  
$
    \sum_{i=1}^q \mathcal{J}_i(u) = \sum_{i=1}^q ||R(\varphi(u,E(x^i)))-y^i\|^2 
$
over $u \in \cV$. Here, the objective is to find the control \(u\) such that the predicted outputs match the targets.

We generate a test dataset \((\mathcal{X}^{\mbox{test}}, \mathcal{Y}^{\mbox{test}})\) of \(q=1000\). Robustness is evaluated via classification accuracy and average per-sample cost under disturbances \(u+\varepsilon\) with \(\|\varepsilon\|_\infty \in [0,0.4]\). Figure~\ref{fig:plot} shows that for robust nODE, the average cost stays around $0.2$ for \(\|\varepsilon\|_\infty \le 0.1\) and grows linearly beyond, while standard nODE cost rises sharply for \(\|\varepsilon\|_\infty > 0.04\).  The accuracy for robust nODE remains near $0.96$ up to \(\|\varepsilon\|_\infty = 0.1\) and decreases linearly afterwards. Standard nODE achieves higher accuracy under low disturbance (\(\|\varepsilon\|_\infty \le 0.07\)) but degrades rapidly for larger \(\varepsilon\). This demonstrates that the robust training effectively rejects disturbances below the chosen \(\rho=0.1\).

\section{Summary}
In this paper, we have introduced an iterative algorithm to solve a non-convex non-concave minimax optimization problem over an infinite-dimensional control space. Our algorithm, which is based on the two core ideas of  flat minima and kernel projection gradient descent, enables the training of robust nODEs against the control disturbance. We have evaluated our approach on a binary classification task by comparing the performances of a robust nODE and standard nODE classifiers under control disturbances.

\bibliographystyle{ieeetr}
\bibliography{learning}

\end{document}